\documentclass[conference]{IEEEtran}
\usepackage{times}
\usepackage[numbers]{natbib}
\usepackage{multicol}
\usepackage{amsmath}
\usepackage{amssymb}
\usepackage{bm}
\usepackage{algorithm}
\usepackage{algorithmic}
\usepackage{booktabs}
\usepackage{multirow}
\usepackage{xcolor} 
\usepackage{graphicx}
\usepackage{subcaption}
\usepackage{amsthm}
\usepackage{makecell}
\newtheorem{proposition}{Proposition}

\usepackage[bookmarks=true]{hyperref}

\begin{document}

\title{Information-Preserving Continuous Occupancy Mapping with Variance-Weighted Submap Joining}

\IEEEoverridecommandlockouts
\author{
    Zhuhua Bai$^{\ast}$, 
    Yingyu Wang$^{\dagger}$, 
    Liang Zhao$^{\dagger}$, 
    and Shoudong Huang$^{\ast}$%
    \thanks{$^{\ast}$Zhuhua Bai and Shoudong Huang are with the Robotics Institute, University of Technology Sydney, Australia  (e-mail: Zhuhua.Bai@student.uts.edu.au; Shoudong.Huang@uts.edu.au).}%
    \thanks{$^{\dagger}$Yingyu Wang and Liang Zhao are with the School of Informatics, University of Edinburgh, UK(e-mail: Yingyu.Wang@uts.edu.au; liang.zhao@ed.ac.uk).}%
}

\maketitle

\begin{abstract}
Large-scale SLAM is a fundamental capability for autonomous navigation,
where accumulated trajectory drift and computational scaling make globally
consistent estimation increasingly difficult as the explored environment
grows. Submap joining has emerged as a key strategy to address this. By first constructing a collection of locally consistent submaps and subsequently fusing them into a common global representation, it bounds local computational complexity while improving robustness to drift and accumulated estimation errors. However, existing occupancy-based submap joining
frameworks operate on discrete grids and rely on bilinear interpolation,
producing non-smooth gradients at submap boundaries and lacking principled
uncertainty weighting. A continuous probabilistic representation could in
principle resolve both issues, but existing approaches either
preclude closed-form posterior inference, scale cubically with observation
count, or rely on a non-conjugate classification likelihood requiring
approximate inference, all of which degrade the predictive uncertainty on
which principled submap joining depends.
We propose the first continuous probabilistic submap joining framework that
jointly optimizes local submap frames and a global occupancy field in the
latent log-odds space. The framework rests on an information-preserving
sparse Bayesian formulation: raw observations are compressed into
sufficient-statistic log-odds tuples, and we prove the compression yields a
weight posterior equivalent to that of the raw observations. This restores
Gaussian conjugacy and yields closed-form posterior mean and covariance,
with the noise precision and relevance set updated jointly within a single
marginal-likelihood maximization. Built on the resulting reliable predictions, the variance-weighted submap joining formulation admits analytical Jacobians and yields a closed-form optimal global map upon pose convergence.

Experiments on 2D simulated and large-scale practical datasets show that
the proposed method achieves superior pose accuracy and global consistency
compared with state-of-the-art grid-based submap joining methods, while
producing more compact representations with substantially better calibrated
uncertainty than existing continuous occupancy baselines.
\end{abstract}

\IEEEpeerreviewmaketitle

\section{Introduction}
Large-scale SLAM is a long-standing challenge in autonomous navigation: as
the explored environment grows, accumulated trajectory drift and increasing
computational cost make globally consistent estimation progressively
difficult. Submap joining has emerged as a foundational strategy for
addressing these
challenges~\cite{huang2008sparse,zhao2013linear,wang2024grid,11029136}.
State-of-the-art occupancy-based
submap joining methods~\cite{wang2024grid,11029136} jointly optimize
submap poses and a global occupancy map within a nonlinear least-squares (NLLS)
formulation, but operate entirely on discrete occupancy grids and
rely on bilinear interpolation during pose-map optimization. This
introduces two structural limitations: (i) the interpolated gradient field
is piecewise-defined and non-smooth at cell boundaries, which can adversely affect optimization convergence and accuracy; and (ii) the NLLS formulation fails to account for occupancy uncertainty, which compromises the reliability of residual weighting during optimization.

\begin{figure}[htbp]
    \centering
    \includegraphics[width=0.49\textwidth]{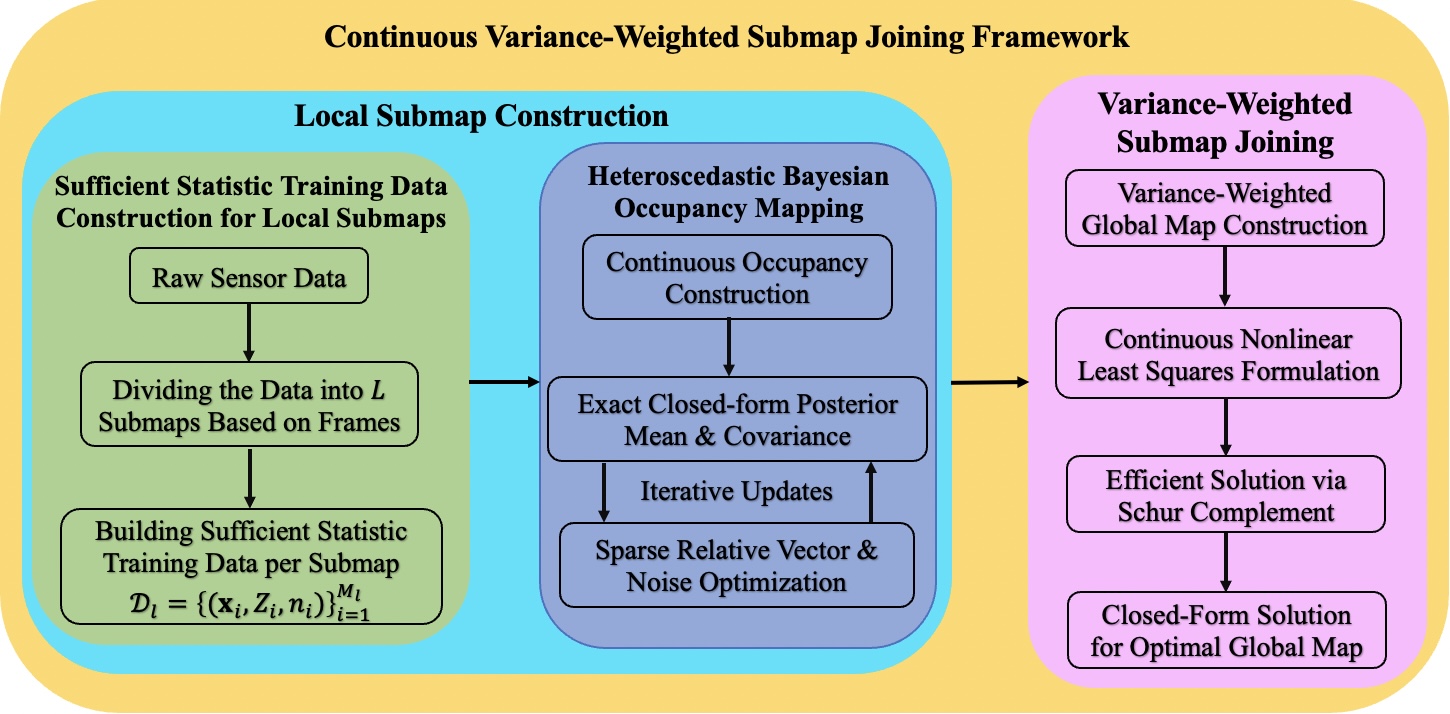}
    \caption{Overview of the proposed framework. Each local submap compresses
raw observations into sufficient-statistic log-odds tuples
$\mathcal{D}_l = \{(\mathbf{x}_i, Z_i, n_i)\}_{i=1}^{M_l}$, where
$\mathbf{x}_i$, $Z_i$, and $n_i$ denote the spatial location, average
log-odds value, and accumulated observation count, respectively. A
heteroscedastic RVM regression then yields closed-form predictive
mean and variance, which drive a variance-weighted joining formulation that
jointly optimizes local submap frames and the global occupancy field via
analytical Jacobians and a closed-form globally optimal map.}
    \label{fig1}
\end{figure}

A natural question is whether a continuous probabilistic occupancy
representation, which offers smooth gradients and predictive uncertainty by
construction, could be combined with submap joining to address both
limitations above. Existing continuous methods, however, are not designed
for large-scale operation. Full Gaussian Process
maps~\cite{o2012gaussian,jadidi2017warped} scale cubically and must retain
all training data. Hilbert Maps~\cite{ramos2016hilbert,senanayake2017bayesian}
and sparse Relevance Vector Machine
methods~\cite{tipping1999relevance,duong2022autonomous} mitigate this through
random Fourier features or relevance-vector sparsification, but their feature
configurations are anchored to a single global frame and cannot absorb
trajectory drift once the map has been built. More critically, virtually all
existing continuous mapping methods take robot poses as given inputs and
cannot refine
them~\cite{ramos2016hilbert,senanayake2017bayesian,duong2022autonomous,li2024gmmap};
long-trajectory odometry error therefore propagates directly into the map,
producing structural inconsistency at loop closures that no local feature
flexibility can correct. Continuous mapping at large scale thus depends on
submap joining as much as grid-based mapping does, yet no submap joining
formulation has been developed for continuous occupancy maps.

Among continuous representations, the RVM framework is uniquely suited for
this role: relevance-vector sparsification yields a compact per-submap kernel
set, and the kernel-based predictor admits analytical spatial derivatives
required for pose Jacobians. Existing RVM-based occupancy
methods~\cite{duong2022autonomous,tipping2003fast}, however, formulate
occupancy as binary classification with a non-conjugate probit or logistic
likelihood, forcing reliance on Laplace or variational
approximation~\cite{tierney1986accurate,tierney1989fully}. The resulting predictive variance is unreliable, especially near occupancy boundaries, which is where submap joining critically requires trustworthy uncertainty to weight residuals during pose optimization.

We address this by reformulating sparse Bayesian occupancy mapping as a
heteroscedastic Gaussian regression problem in the latent log-odds space,
with raw observations compressed into sufficient-statistic log-odds tuples.
We prove that this compression is lossless for the weight posterior,
restoring Gaussian conjugacy and yielding exact closed-form posterior mean
and covariance. The heteroscedastic noise precision is additionally
updated jointly with the relevance set within a single marginal-likelihood
maximization, rather than in a separate stage. Built on the resulting
reliable predictions, our variance-weighted submap joining
formulation admits analytical Jacobians throughout the optimization and yields a closed-form optimal global map upon pose convergence via variance-weighted fusion. The overall
pipeline is illustrated in Fig.~\ref{fig1}.

The main contributions of this paper are as follows:
\begin{enumerate}
\item The first variance-weighted continuous submap joining framework that
jointly optimizes local submap frames and a global occupancy field directly
in a continuous probabilistic latent space, with analytical Jacobians and a closed-form optimal global map upon pose convergence.
\item An information-preserving heteroscedastic RVM occupancy formulation
based on sufficient-statistic log-odds compression, theoretically proven
to yield a weight posterior equivalent to that of the raw observations.
This restores Gaussian conjugacy, removes the Laplace approximation of
classification-based RVM, and supports joint update of the noise precision
and the relevance set.
\item Extensive evaluation on 2D simulated and large-scale practical datasets, demonstrating superior pose accuracy, global consistency, model compactness, and uncertainty calibration compared with state-of-the-art grid-based and RVM-based baselines.
\end{enumerate}

\section{Related Works}
\textbf{Occupancy Mapping.} Traditional occupancy grid mapping~\cite{elfes1989using,moravec1985high,hornung2013octomap} discretizes space into independent cells and updates each cell's log-odds value from range measurements~\cite{thrun2002probabilistic}. Cartographer~\cite{hess2016real} introduces local submaps for scalable real-time mapping, while OctoMap~\cite{hornung2013octomap} and adaptive-resolution variants~\cite{khan2014rmap,fisher2021colmap} reduce memory consumption. More recently, Occupancy-SLAM~\cite{11029136} jointly optimizes robot poses and the occupancy grid within a unified least-squares framework.
 
\textbf{Continuous Occupancy Mapping.} To capture spatial correlations missed by independent grids, GP-based methods~\cite{o2012gaussian,jadidi2017warped} infer continuous occupancy fields with predictive uncertainty but suffer from cubic complexity. Hilbert Maps~\cite{ramos2016hilbert} and their Bayesian extension~\cite{senanayake2017bayesian,zhi2019continuous} reformulate occupancy estimation via random Fourier features with logistic regression, achieving faster training at the cost of a non-conjugate likelihood and fixed feature configuration. RVM-based methods~\cite{tipping1999relevance,tipping2001sparse,tipping2003fast} construct sparse representations through Automatic Relevance Determination; FSBM and SBKM~\cite{tipping2003fast,duong2022autonomous} extend them with incremental training. GMMap~\cite{li2024gmmap} achieves real-time mapping through Gaussian mixture representations. The recent work~\cite{gentil2025towards} explicitly advocates operating on the latent occupancy field, motivating our work.
 
\textbf{Submap Joining and Continuous Map Fusion.} Submap joining decomposes large-scale mapping into locally consistent submaps that are subsequently aligned~\cite{huang2008sparse,zhao2013linear,wang2019submap}. Dense alignment approaches include TSDF-based pose graph optimization~\cite{wagner2014graph}, SDF-based fusion~\cite{curless1996volumetric}, and ICP on occupancy-derived point clouds~\cite{ho2018virtual,besl1992method}. Multi-robot continuous map fusion has also been studied: Doherty~\cite{doherty2016probabilistic} merge Hilbert map predictions through variance averaging on a discretized grid, and Zhi~\cite{zhi2019continuous} fuse Bayesian Hilbert Maps via weight-space conflation; both assume known and accurate poses. The state-of-the-art occupancy-based submap joining methods (GBSJ)~\cite{wang2024grid} and Occupancy-SLAM~\cite{11029136} jointly optimize submap poses and a global occupancy grid through nonlinear least squares. Our work is the first to formulate joint pose-map optimization directly on a continuous probabilistic occupancy field.

\section{INFORMATION-PRESERVING CONTINUOUS OCCUPANCY MAPPING } 
\label{sec:conclusion}
\subsection{Sparse Bayesian Occupancy Mapping Preliminaries}
Existing sparse Bayesian occupancy mapping methods~\cite{duong2022autonomous,tipping2003fast} formulate occupancy as a binary classification problem with labels $y_l \in \{-1, 1\}$ and a probit or logistic likelihood. The resulting non-conjugacy with the Gaussian prior makes exact posterior inference intractable and forces reliance on Laplace ~\cite{tierney1986accurate} or variational approximation~\cite{tierney1989fully}, which introduce inner-loop iterations and degrade uncertainty estimation near occupancy boundaries. Moreover, the standard training-data construction, i.e., binary downsampled labels, discards the observation-confidence information accumulated by repeated measurements, treating densely and sparsely observed regions identically. We address both limitations by reformulating occupancy estimation as a Gaussian-conjugate Bayesian regression problem in the latent log-odds space using sufficient-statistic compression, presented next.

\begin{figure}[htbp]
    \centering
    \includegraphics[width=0.44\textwidth]{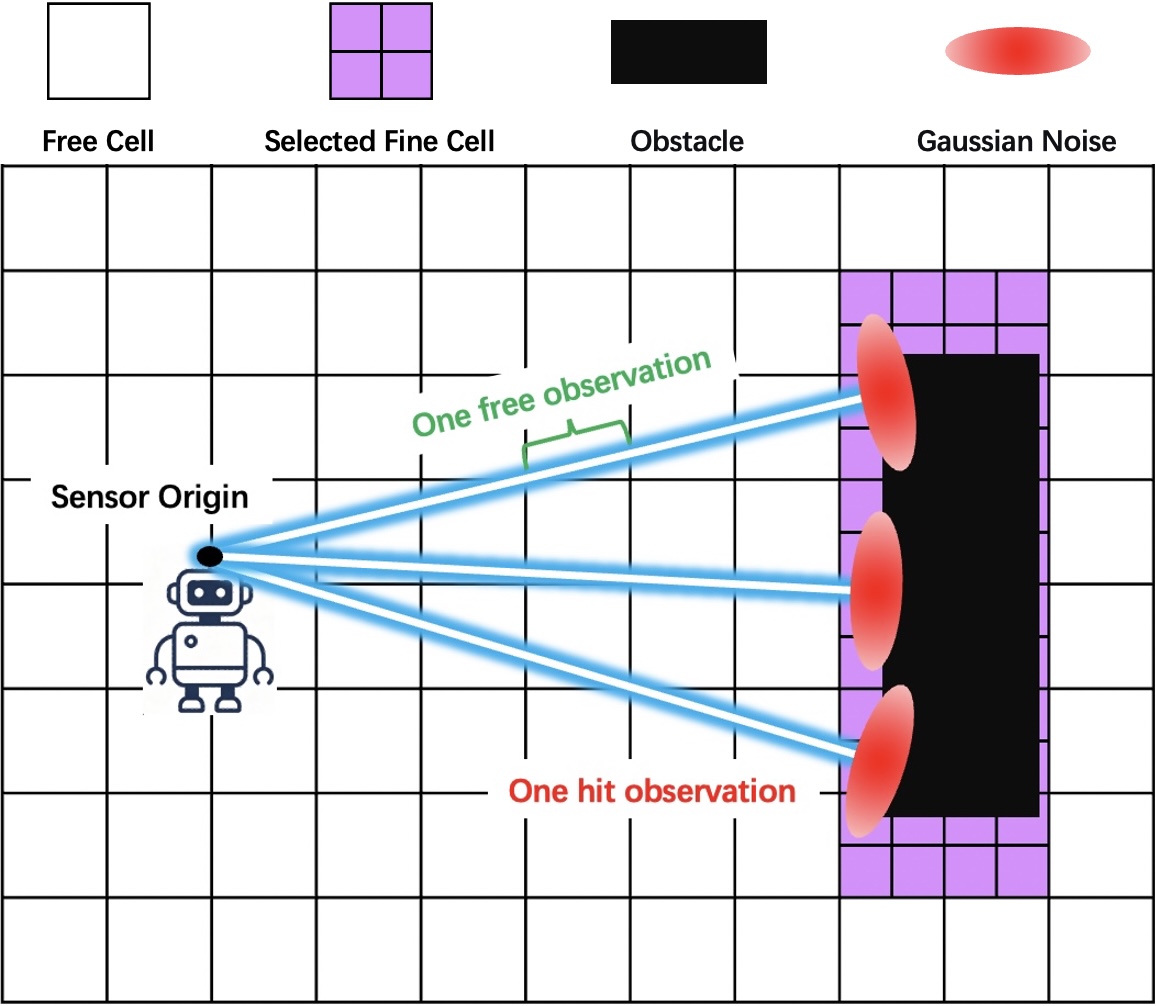}
    \caption{Illustration of hit and free observations under Gaussian observation modeling. A laser ray is emitted from the sensor origin. The endpoint, visualized with Gaussian uncertainty, is treated as a hit observation, whereas all traversed grid cells are treated as free observations. Finer grid resolution is selectively applied near obstacle boundaries to preserve structural details.}
    \label{fig:example}
\end{figure}
\subsection{Bayesian Continuous Occupancy Mapping}
\subsubsection{Construction of the Training Data}\label{h}
We build a multi-resolution grid map. Grid resolution is refined by a factor $h>1$ (e.g., $h=2$) near obstacle boundaries and coarsened (e.g., $0.5$~m) in free space to preserve fine details while controlling the number of training samples. For each laser ray, the endpoint is treated as an occupied (hit) observation, and all traversed cells as free observations (see Fig.~\ref{fig:example}).

For the $i$-th grid cell located at $\mathbf{x}_i$, the $j$-th observation of its occupancy value is represented in log-odds form as

\begin{equation}
z_{ij} = \ln \frac{p(\mathbf{x}_{i} \in \text{occ})}{1 - p(\mathbf{x}_{i} \in \text{occ})}
\end{equation}

Following~\cite{hornung2013octomap,thrun2002probabilistic}, occupied and free observations are assigned fixed log-odds values,
$z_{\mathrm{hit}}=\ln(0.7/0.3)$ and
$z_{\mathrm{free}}=\ln(0.4/0.6)$, respectively.
Given $n_i$ observations of grid cell $\mathbf{x}_i$, the corresponding average log-odds value is computed as
$Z_i=\frac{1}{n_i}\sum_{j=1}^{n_i} z_{ij}$. 

The resulting training dataset is constructed as $\mathcal{D} = \{(\mathbf{x}_i, Z_i, n_i)\}_{i=1}^{M}$, where $M$ denotes the total number of observed grid cells.

\subsubsection{Heteroscedastic Continuous Occupancy Construction}

Each relevance vector defines a basis function via a kernel
$k(\mathbf{x}, \mathbf{x}_m)$, mapping a grid cell $\mathbf{x}$ to a feature
vector
$\Phi_{\mathbf{x}} = [k(\mathbf{x},\mathbf{x}_1), \ldots, k(\mathbf{x},\mathbf{x}_M)]^\top \in \mathbb{R}^{M}$.
We adopt the radial basis function kernel
$k(\mathbf{x}, \mathbf{x}_m) = \eta\exp(-\gamma \|\mathbf{x} - \mathbf{x}_m\|^2)$
and model the latent occupancy field directly in the log-odds space as
\begin{equation}
F(\mathbf{x}) := \Phi_\mathbf{x}^\top \bm{w} + b
\end{equation}
with weights $\bm{w} \in \mathbb{R}^{M}$ and bias $b \in \mathbb{R}$.
 
Observations falling into the same grid cell are assumed to share the latent
value $F(\mathbf{x})$ and to be conditionally independent. A raw log-odds
measurement is modeled as
\begin{equation}
z_{ij} = F(\mathbf{x}_i) + \varepsilon,\qquad \varepsilon \sim \mathcal{N}(0, \beta^{-1})
\end{equation}
with the noise precision $\beta$.

By the Gaussian closure property, the per-cell average
$Z_i$ satisfies
\begin{equation}
p(Z_i \mid \mathbf{x}_i, \bm{w}, \beta, n_i) = \mathcal{N}\!\left( Z_i \,\big|\, \Phi_{\mathbf{x}_i}^{\top} \bm{w} + b,\; \frac{1}{\beta n_i} \right)
\end{equation}
Therefore, the effective measurement precision accumulates linearly as
$\beta n_i$. Stacking all $M$ cells yields the heteroscedastic data likelihood
\begin{equation}
\label{eq:hetlik}
p(\mathbf{Z} \mid \mathbf{X}, \bm{w}, \beta, \mathbf{N}) = \mathcal{N}\!\left(\mathbf{Z} \,\big|\, \Phi^{\top} \bm{w} + b\mathbf{1},\,\frac{1}{\beta \mathbf{N}}\right)
\end{equation}
where $\mathbf{X} = [\mathbf{x}_1, \ldots, \mathbf{x}_M]^\top \in \mathbb{R}^{M \times 2}$, $\mathbf{Z} = [Z_1, \ldots, Z_M]^\top \in \mathbb{R}^{M \times 1}$, $\mathbf{N} = \operatorname{diag}(n_1, \ldots, n_M) \in \mathbb{R}^{M \times M}$ is a diagonal information-weighting matrix containing
the observation counts associated with each sample, and $\Phi \in \mathbb{R}^{M \times M}$ is the kernel matrix evaluated between all pairs of training inputs.

An RVM model imposes a Gaussian prior on each weight $w_m$ with zero mean and precision $\alpha_m$ (i.e., variance $\frac{1}{\alpha_m}$).

\begin{equation}
\label{eq:prior}
p(\bm{w} \mid \boldsymbol{\alpha}) = \prod_{m=1}^M \mathcal{N}(w_m \mid 0, \alpha_m^{-1}) = \mathcal{N}(\bm{w} \mid \mathbf{0}, \mathbf{A}^{-1})
\end{equation}
where $\boldsymbol{\alpha} = [\alpha_1, \dots, \alpha_M]^\top \in \mathbb{R}^{M \times 1}$ denotes the precision vector associated with the weight vector $\bm{w}$, and $\mathbf{A} = \operatorname{diag}(\alpha_1, \dots, \alpha_M) \in \mathbb{R}^{M \times M}$ is the corresponding diagonal precision matrix.

With the heteroscedastic likelihood~\eqref{eq:hetlik} and the Gaussian
prior~\eqref{eq:prior} both defined, we can now state the lossless-compression
property that underpins the entire formulation.

\begin{proposition}
\label{prop:compression}
The set of raw per-observation log-odds measurements $\{z_{ij}\}_{i=1,\,j=1}^{\,M,\,n_i}$ is compressed into the sufficient statistics $\mathcal{D} = \{(\mathbf{x}_i, Z_i, n_i)\}_{i=1}^{M}$.
Under this model, for any
hyperparameters $(\boldsymbol{\alpha}, \beta)$ the weight posteriors induced
by the two representations coincide:
\begin{equation}
p\!\left(\bm{w} \,\big|\, \{z_{ij}\}_{i=1,\,j=1}^{\,M,\,n_i}, \boldsymbol{\alpha}, \beta\right) \;=\; p\!\left(\bm{w} \,\big|\, \mathcal{D},\, \boldsymbol{\alpha}, \beta\right)
\end{equation}
\end{proposition}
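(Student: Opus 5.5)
The plan is to apply Bayes' rule to both representations and show that their log-likelihoods, viewed as functions of $\bm{w}$, differ only by an additive constant independent of $\bm{w}$. Both posteriors share the prior~\eqref{eq:prior}, and each is proportional to prior times likelihood. If the two likelihoods are proportional as functions of $\bm{w}$ (for fixed $\beta$, $b$, $\mathbf{X}$, $\mathbf{N}$), the normalized posteriors coincide. The whole argument therefore reduces to a sufficiency / factorization statement for the Gaussian mean.

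\textbf{Raw likelihood.} Under the per-observation model $z_{ij} = F(\mathbf{x}_i) + \varepsilon$, with conditional independence within and across cells, the raw likelihood is
\begin{equation*}
\prod_{i=1}^{M}\prod_{j=1}^{n_i} \mathcal{N}\!\left(z_{ij}\,\big|\, F_i,\beta^{-1}\right),
\end{equation*}
where $F_i := \Phi_{\mathbf{x}_i}^\top \bm{w} + b$.

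\textbf{Sum-of-squares decomposition.} Inside each cell, I would add and subtract the mean $Z_i$:
\begin{equation*}
\sum_{j}(z_{ij}-F_i)^2=\sum_j (z_{ij}-Z_i)^2 + n_i (Z_i-F_i)^2 .
\end{equation*}
The cross term vanishes because $\sum_j (z_{ij}-Z_i)=0$ by the definition of $Z_i$. The first summand depends only on the data, not on $\bm{w}$.

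\textbf{Matching the compressed likelihood.} Collecting the second summand over cells gives
\begin{equation*}
-\tfrac{\beta}{2}(\mathbf{Z}-\Phi^\top\bm{w}-b\mathbf{1})^\top \mathbf{N}(\mathbf{Z}-\Phi^\top\bm{w}-b\mathbf{1}),
\end{equation*}
which is exactly the $\bm{w}$-dependent part of $\ln$ of the heteroscedastic likelihood~\eqref{eq:hetlik}. The normalizing constants differ as well: the raw likelihood carries $(\beta/2\pi)^{\sum_i n_i/2}$, while the compressed one carries $\prod_i(\beta n_i/2\pi)^{1/2}$. Both are free of $\bm{w}$, so the two likelihoods are proportional in $\bm{w}$ with a ratio depending only on $(\{z_{ij}\},\beta,\mathbf{N})$.

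\textbf{Conclusion.} Multiplying each likelihood by $\mathcal{N}(\bm{w}\mid\mathbf{0},\mathbf{A}^{-1})$ and normalizing over $\bm{w}$ cancels the proportionality constant. Hence the two posteriors are equal for every $(\boldsymbol{\alpha},\beta)$. It is optional to note that completing the square yields the common Gaussian posterior with covariance $(\mathbf{A}+\beta\Phi\mathbf{N}\Phi^\top)^{-1}$, but this is not needed for equality. If $b$ is treated as a parameter, the same argument holds with $b$ fixed or absorbed into $\bm{w}$.

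\textbf{Main subtlety.} The algebra is routine. The point needing care is keeping the $\bm{w}$-independent factor separate from the hyperparameters. It depends on $\beta$ through the residual sum $\tfrac{\beta}{2}\sum_{ij}(z_{ij}-Z_i)^2$ and the count-dependent constants. This is harmless for the posterior over $\bm{w}$, which is what the proposition claims. It would not make the marginal likelihoods equal, so I would state explicitly that equivalence is claimed only at the level of the weight posterior for fixed hyperparameters.
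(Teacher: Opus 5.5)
Your proof is correct and follows essentially the same route as the paper. Both use the within-cell decomposition $\sum_j (z_{ij}-F_i)^2 = n_i(Z_i-F_i)^2 + \sum_j (z_{ij}-Z_i)^2$ to show that the raw and compressed log-likelihoods differ only by a $\bm{w}$-independent term, so the posteriors coincide after multiplying by the common prior and normalizing; your caveat about the marginal likelihood also matches the paper's closing remark that the leftover difference depends on $\beta$ and the data but not on $\boldsymbol{\alpha}$.
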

  
\begin{proof} 
Let $\mu_{\bm{w},i} := \Phi_{\mathbf{x}_i}^\top \bm{w} + b$ denote the model
prediction at cell $\mathbf{x}_i$. Applying the identity
$z_{ij} - \mu_{\bm{w},i} = (z_{ij} - Z_i) + (Z_i - \mu_{\bm{w},i})$ and using
$\sum_{j=1}^{n_i}(z_{ij} - Z_i) = 0$,
\begin{equation}
\sum_{j=1}^{n_i} (z_{ij} - \mu_{\bm{w},i})^2 = n_i (Z_i - \mu_{\bm{w},i})^2 + S_i,
\end{equation}
where $S_i := \sum_{j=1}^{n_i}(z_{ij} - Z_i)^2$ depends only on the raw data,
not on $\bm{w}$ or $\boldsymbol{\alpha}$.
 
The raw log-likelihood is therefore
\begin{equation}
\ell_{\text{raw}}(\bm{w}) = -\tfrac{\beta}{2} \sum_i n_i (Z_i - \mu_{\bm{w},i})^2 \;-\; \tfrac{\beta}{2} \sum_i S_i \;+\; C_1,
\end{equation}
with $C_1$ collecting normalization constants in $\beta$ and $n_i$. The
compressed log-likelihood induced by~\eqref{eq:hetlik} is
\begin{equation}
\ell_{\text{comp}}(\bm{w}) = -\tfrac{\beta}{2} \sum_i n_i (Z_i - \mu_{\bm{w},i})^2 + C_2.
\end{equation}
The difference $\ell_{\text{raw}}(\bm{w}) - \ell_{\text{comp}}(\bm{w})$
is independent of $\bm{w}$, so multiplying by the Gaussian prior
$p(\bm{w} \mid \boldsymbol{\alpha})$ and normalizing yields identical
posteriors on $\bm{w}$.
 
For the marginal likelihood, integrating over $\bm{w}$ leaves the difference
$C_1 - C_2 - \tfrac{\beta}{2}\sum_i S_i$, which depends only on $\beta$ and
the data, not on $\boldsymbol{\alpha}$. 
\end{proof}
Operationally, Proposition~\ref{prop:compression} guarantees that replacing the $\sum_i^M n_i$ raw measurements by the $M$ tuples $(\mathbf{x}_i, Z_i, n_i)$ incurs no information loss for either posterior inference or hyperparameter learning; training-set size is reduced dramatically while exact Bayesian inference is preserved.
 
The weight posterior is obtained via Bayes’ rule

\begin{equation}
p(\bm{w} \mid \mathbf{Z}, \mathbf{X}, \boldsymbol{\alpha}, \beta, \mathbf{N}) = \frac{p(\mathbf{Z} \mid \mathbf{X}, \bm{w}, \beta, \mathbf{N}) p(\bm{w} \mid \boldsymbol{\alpha})}{p(\mathbf{Z} \mid \mathbf{X}, \boldsymbol{\alpha}, \beta, \mathbf{N})}
\end{equation}

Due to Gaussian conjugacy, the posterior distribution also remains Gaussian:

\begin{equation}
p(\bm{w} \mid \mathbf{Z}, \mathbf{X}, \boldsymbol{\alpha}, \beta, \mathbf{N}) = \mathcal{N}(\bm{w} \mid \boldsymbol{\mu}, \boldsymbol{\Sigma})
\end{equation}

The $\boldsymbol{\mu} \in \mathbb{R}^{M \times 1}$ and the $\bm{\Sigma} \in \mathbb{R}^{M \times M}$ are the posterior mean and posterior covariance, respectively.
Therefore, the exact closed-form expressions for the posterior mean and covariance can be obtained directly, eliminating approximation errors and enhancing training efficiency.

\begin{equation}
\begin{gathered}
\boldsymbol{\Sigma} = (\boldsymbol{\Sigma}^{-1})^{-1} = (\beta \Phi^\top \mathbf{N} \Phi + \mathbf{A})^{-1} \\
\boldsymbol{\mu} = \beta \boldsymbol{\Sigma} \Phi^\top \mathbf{N} (\mathbf{Z} - b \mathbf{1})
\end{gathered}
\end{equation}

The posterior precision naturally decomposes into a prior $\mathbf{A}$ and a data term $\beta \Phi^\top \mathbf{N} \Phi$. The posterior mean aggregates data via $\Phi^\top \mathbf{N}(\mathbf{Z} - b\mathbf{1})$, which accumulates kernel-weighted log-odds signals before covariance distribution. Because $\mathbf{N}$ is diagonal, each cell contributes proportionally to its accumulated precision $\beta n_i$. Consequently, densely observed regions exert stronger influence on both precision and posterior mean, yielding lower predictive uncertainty and directly linking observation frequency to Bayesian information accumulation.

\subsubsection{Joint Sequential Optimization of \texorpdfstring{$(\boldsymbol{\alpha}, \beta)$}{(alpha, beta)}}
The hyperparameters $(\boldsymbol{\alpha}, \beta)$ are determined by type-II maximum likelihood, i.e., by maximizing the log marginal likelihood
\begin{equation}
\label{log}
\mathcal{L}(\boldsymbol{\alpha}, \beta) = -\tfrac{1}{2}\!\left[M\log(2\pi) + \log|\mathbf{C}| + \mathbf{r}^\top \mathbf{C}^{-1} \mathbf{r}\right]
\end{equation}
with $\mathbf{C} = (\beta\mathbf{N})^{-1} + \Phi \mathbf{A}^{-1} \Phi^\top$ and
$\mathbf{r}=\mathbf{Z} - b\mathbf{1}$.

Unlike prior RVM methods~\cite{tipping2003fast} that alternate between updating $\boldsymbol{\alpha}$ (with $\beta$ fixed) and refitting $\beta$ after $\boldsymbol{\alpha}$ converges, we update $\boldsymbol{\alpha}$ and $\beta$ jointly in every iteration. The updates are coupled via $\beta \Phi^\top \mathbf{N} \Phi$: $\boldsymbol{\alpha}$ controls which kernel columns are active, and $\beta$ aggregates per-cell residuals weighted by the observation counts $n_i$. Joint updating exploits this coupling, leading to fewer outer iterations and lower marginal likelihoods.

The sparsity update for each candidate $\mathbf{x}_m$ follows the standard Tipping--Faul criterion \cite{tipping2003fast} $\theta_m = q_m^2 - s_m$ with auxiliary
quantities $s_m, q_m$ defined as in~\eqref{eq:sq} below.
\begin{equation}
\label{eq:sq}
\begin{aligned}
s_m &:= 
\begin{cases} 
\dfrac{\alpha_m S_m}{\alpha_m - S_m}, & \text{if } \alpha_m < \infty \\[6pt]
S_m, & \text{otherwise}
\end{cases} \\
q_m &:= 
\begin{cases} 
\dfrac{\alpha_m Q_m}{\alpha_m - S_m}, & \text{if } \alpha_m < \infty \\[6pt]
Q_m, & \text{otherwise}
\end{cases}
\end{aligned}
\end{equation}

Here, \( S_m = \Phi_m^\top \mathbf{C^{-1}} \Phi_m \), \( Q_m = \Phi_m^\top \mathbf{C^{-1}} (\mathbf{Z} - b\mathbf{1}) \), and \( \Phi_m \in \mathbb{R}^{M \times 1}\) denotes the \( m \)-th column of the training feature matrix \( \Phi \in  \mathbb{R}^{M \times M}\).

The value of \( \theta_m \) determines the status of the sample:
\begin{itemize}
    \item If \( \theta_m > 0 \), then \( \mathbf{x}_m \) is either updated (when \( \alpha_m < \infty \)) or added (when \( \alpha_m = \infty \)) as a relevance vector, with its precision set to \( \alpha_m = \frac{s_m^2}{q_m^2 - s_m} \).
    \item If \( \theta_m \leq 0 \) and \( \alpha_m < \infty \), the sample \( \mathbf{x}_m \) is removed from the RVM model.
\end{itemize}

\textbf{Updating of $\beta$.}
Taking the partial derivative of \eqref{log} with respect to \( \beta \) yields:
\begin{equation}
\frac{\partial \mathcal{L}}{\partial \beta} = -\frac{1}{2} \left[ \frac{\partial \log \mathbf{|C|}}{\partial \beta} + \frac{\partial}{\partial \beta} (\mathbf{r}^\top \mathbf{C}^{-1} \mathbf{r}) \right]
\end{equation}

The first term can be computed as:
\begin{equation}
\frac{\partial \log \mathbf{|C|}}{\partial \beta} = \frac{1}{\beta} \left( \sum_{m=1}^{N_{RV}} \gamma_m - M \right), \quad \gamma_m \equiv 1 - \alpha_m \Sigma_{mm}
\end{equation}

The second term can be computed as:
\begin{equation}
\frac{\partial \left( \mathbf{r}^\top \mathbf{C}^{-1} \mathbf{r} \right)}{\partial \beta} = \mathbf{v}^\top \mathbf{N} \mathbf{v} = \sum_{i=1}^M n_i r_i^2
\end{equation}
where \( \mathbf{v} = \mathbf{Z} - b\mathbf{1} - \Phi \boldsymbol{\mu} \).

By setting the $\partial \mathcal{L}/\partial \beta = 0$, we can obtain the update rule for $\beta$:
\begin{equation}
\beta^{\text{new}} = \frac{M - \sum_{m=1}^{N_{\text{RV}}} \gamma_m}{\sum_{i=1}^M n_i r_i^2}
\end{equation}
where the relevant terms are defined as follows:
\begin{itemize}
    \item $N_{\text{RV}}$ is the number of currently active relevance vectors.
    \item $\gamma_m \equiv 1 - \alpha_m \Sigma_{mm}$ acts as a measure of well-determinedness for the corresponding parameter. Here, $\Sigma_{mm}$ is the $m$-th diagonal element of the posterior covariance matrix $\boldsymbol{\Sigma}$, representing the posterior variance of the weight $w_m$.
    \item $r_i$ is the $i$-th element of the residual error, calculated from $\mathbf{v} = \mathbf{Z} - b\mathbf{1} - \boldsymbol{\Phi} \boldsymbol{\mu}$.
\end{itemize}

The residual is weighted by $n_i$ as each averaged observation $Z_i$ represents $n_i$ independent measurements, contributing proportionally to the accumulated precision. The procedure is detailed in Algorithm \ref{alg:rvm}.
\begin{algorithm}[t]
\caption{Sequential Average Log-Odds RVM Training}
\label{alg:rvm}
\begin{algorithmic}[1]
\REQUIRE Training set $\mathcal{D} = \{(\mathbf{x}_i, Z_i, n_i)\}_{i=1}^{M}$, kernel parameter $\eta$ and $\gamma$, bias $b$, convergence threshold $\tau_{\boldsymbol{\alpha}}$ and $\tau_{\beta}$
\ENSURE Relevance vectors $\mathcal{S}$, posterior $\boldsymbol{\mu}, \boldsymbol{\Sigma}$, hyperparameters $\boldsymbol{\alpha}, \beta$

\STATE Initialize $\alpha_m \leftarrow \infty$ for all $m$, \quad $\beta \leftarrow 1/\mathrm{Var}(\mathbf{Z})$
\STATE Compute $S_m, Q_m$ for all candidates $m = 1, \dots, M$
\STATE Select $m^* = \arg\max_m \, q_m^2 / s_m$, activate $m^*$ into $\mathcal{S}$

\REPEAT
  \STATE \textbf{(Posterior)} Compute $\boldsymbol{\Sigma}_\mathcal{S} = (\beta \boldsymbol{\Phi}_\mathcal{S}^\top \mathbf{N} \boldsymbol{\Phi}_\mathcal{S} + \mathbf{A}_\mathcal{S})^{-1}$, \quad $\boldsymbol{\mu}_\mathcal{S} = \beta \boldsymbol{\Sigma}_\mathcal{S} \boldsymbol{\Phi}_\mathcal{S}^\top \mathbf{N}(\mathbf{Z} - b\mathbf{1})$
  \STATE \textbf{(Sparsity)} Compute $S_m, Q_m$ and leave-one-out $s_m, q_m$ for all $m = 1, \dots, M$
  \STATE Set $\theta_m \leftarrow q_m^2 - s_m$ for each candidate $m$

  \FOR{each candidate $m$}
    \IF{$\theta_m > 0$ and $\alpha_m = \infty$}
      \STATE \textbf{Add}: $\alpha_m \leftarrow s_m^2 / (q_m^2 - s_m)$, \quad $\mathcal{S} \leftarrow \mathcal{S} \cup \{m\}$
    \ELSIF{$\theta_m > 0$ and $\alpha_m < \infty$}
      \STATE \textbf{Re-estimate}: $\alpha_m \leftarrow s_m^2 / (q_m^2 - s_m)$
    \ELSIF{$\theta_m \leq 0$ and $\alpha_m < \infty$}
      \STATE \textbf{Delete}: $\alpha_m \leftarrow \infty$, \quad $\mathcal{S} \leftarrow \mathcal{S} \setminus \{m\}$
    \ENDIF
  \ENDFOR

  \STATE \textbf{(Noise)} $\beta \leftarrow (M - \sum_{m \in \mathcal{S}} \gamma_m) \,/\, \sum_{i=1}^{M} n_i r_i^2$, \quad where $r_i = Z_i - b - \boldsymbol{\phi}_{i,\mathcal{S}}^\top \boldsymbol{\mu}_\mathcal{S}$

\UNTIL{$\|\boldsymbol{\alpha}^{(t)} - \boldsymbol{\alpha}^{(t-1)}\| < \tau_{\boldsymbol{\alpha}}$ and $|\beta^{(t)} - \beta^{(t-1)}| < \tau_{\beta}$}

\RETURN $\mathcal{S}, \, \boldsymbol{\mu}_\mathcal{S}, \, \boldsymbol{\Sigma}_\mathcal{S}, \, \boldsymbol{\alpha}_\mathcal{S}, \, \beta$
\end{algorithmic}
\end{algorithm}

\subsection{Continuous Occupancy Prediction}
After the model has been trained, for a query location $\mathbf{x}_*$, the predictive distribution is
\begin{equation}
\label{eq:pred}
\begin{gathered}
p(Z_*\mid\mathbf{x}_*) = \mathcal{N}(Z_*\mid\mu_*,\sigma_*^2) \\[4pt]
\mu_* = \Phi_{\mathbf{x}_*}^\top\boldsymbol{w}+b,\quad \sigma_*^2 = \Phi_{\mathbf{x}_*}^\top\boldsymbol{\Sigma}\,\Phi_{\mathbf{x}_*}
\end{gathered}
\end{equation}

The latent prediction for $\mathbf{x}_*$ can be converted to an occupancy probability via the probit approximation~\cite{seeger2004gaussian}, which incorporates both predictive mean and variance:
\begin{equation}
\label{Continuous Occupancy Prediction}
\overline{P}_{\text{occ}}(\mathbf{x}_*) \approx \frac{1}{1+\exp(-\kappa\mu_*)},\quad \kappa = \frac{1}{\sqrt{1+\pi\sigma_*^2/8}}
\end{equation}

With thresholds $0<e_{\text{free}}<e_{\text{occ}}<1$, a query is classified as occupied if $\overline{P}_{\text{occ}}>e_{\text{occ}}$, free if $\overline{P}_{\text{occ}}<e_{\text{free}}$, and unknown otherwise.
\subsection{Variance-Weighted Continuous Submap Joining}
\label{sec:Average Log-odds RVM for Submap Joining}
The continuous occupancy field defined by~\eqref{eq:pred} is differentiable in $\mathbf{x}$, so both the predictive mean and variance can be queried analytically at arbitrary locations. This enables direct continuous joint optimization of submap frames and the global occupancy field, without the bilinear interpolation required by discrete grid-based approaches~\cite{wang2024grid,11029136}. Fig.~\ref{fig:submap_joining} illustrates the joining problem.

\begin{figure}[htbp]
\centering
\begin{subfigure}[b]{0.22541\textwidth}
    \centering
    \includegraphics[width=\textwidth]{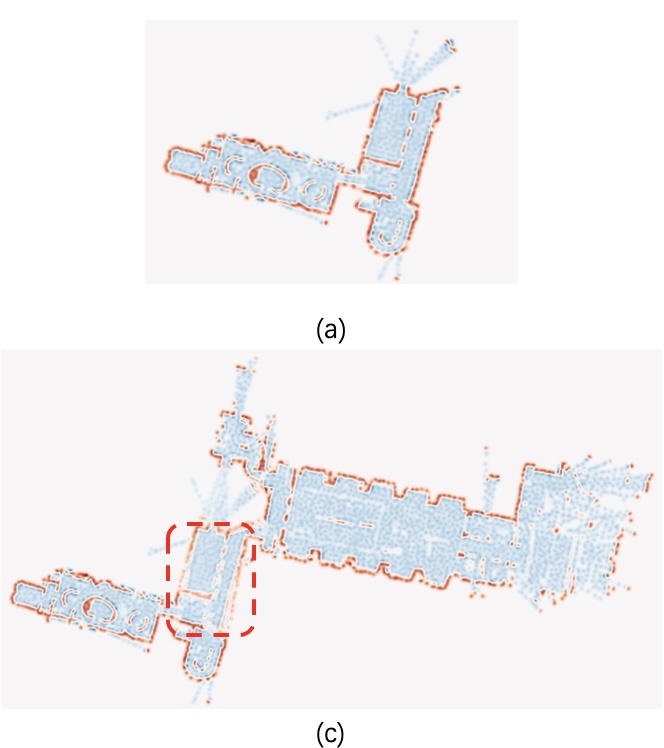}
\end{subfigure}
\hfill
\begin{subfigure}[b]{0.257\textwidth}
    \centering
    \includegraphics[width=\textwidth]{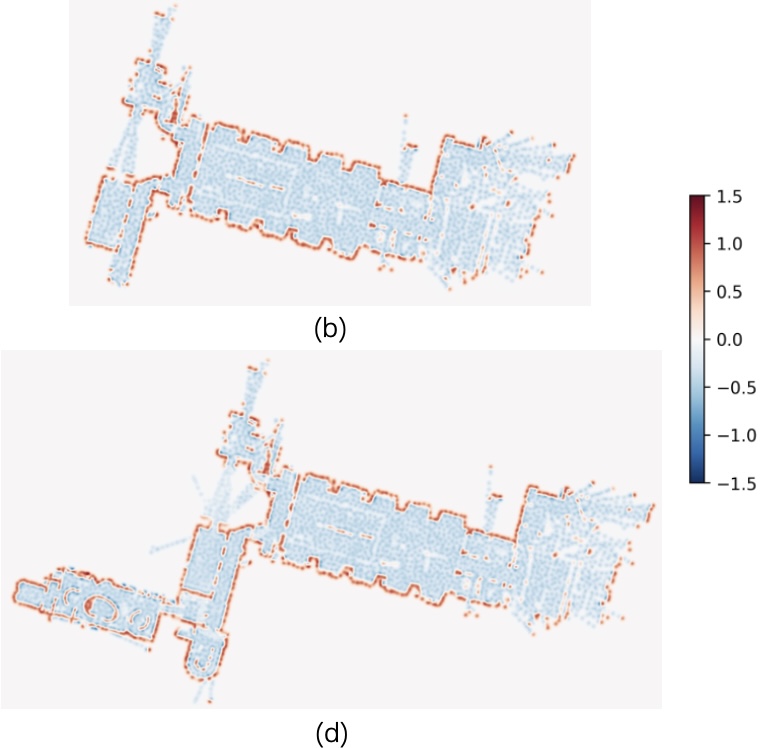}
\end{subfigure}

\caption{Continuous submap joining. (a) and (b) show the two continuous submaps. (c) shows the inconsistent global map from pose-only fusion. (d) shows the consistent result from our proposed method.}
\label{fig:submap_joining}
\label{fig:submap_joining}
\end{figure}

\subsubsection{Variance-Weighted Global Map Construction} Assuming conditional independence between local submap predictions, global fusion is formulated as a minimum-variance Gaussian fusion problem. For $N$ overlapping submaps, the global occupancy estimate at location $\mathbf{x}_f$ is constructed via variance-weighted fusion of the local predictive means $\mu_i(\mathbf{x}_f)$ and variances $\sigma_i^2(\mathbf{x}_f)$.
\begin{equation}
\begin{split}
\label{same1}
\mu_G(\mathbf{x}_f) &= \frac{\sum_{i=1}^N \mu_i(\mathbf{x}_f) / (\sigma_i^2(\mathbf{x}_f)+\epsilon)}
                        {\sum_{i=1}^N 1 / (\sigma_i^2(\mathbf{x}_f)+\epsilon)} \\[4pt]
\sigma_G^2(\mathbf{x}_f) &= \frac{1}{\sum_{i=1}^N 1 / (\sigma_i^2(\mathbf{x}_f)+\epsilon)}
\end{split}
\end{equation}
where  \(\epsilon\) is a small constant (e.g., \(10^{-6}\)) to prevent division by zero or numerical instability.

Under the assumption of independent Gaussian submap predictions, variance-weighted fusion naturally assigns higher weights to submaps with lower predictive uncertainty, which may arise from either denser observations or higher posterior precision.
\subsubsection{NLLS Formulation}
To formulate the submap joining problem as a NLLS problem, we first discretize the global map. Let the global map be represented by a grid of cells $\{\mathbf{M}(m_1), \dots, \mathbf{M}(m_G)\}$, where $G = l_w \times l_h$ and $m_j$ is the coordinate of the $j$-th cell. Denote the latent global log-odds value of $j$-th cell as $\mathbf{M}(m_j)$. Then the global map is fully described by the vector $\mathbf{M} = [\mathbf{M}(m_1), \dots, \mathbf{M}(m_G)]^\top$.

The state vector of the NLLS problem consists of the poses of the local submap coordinate (except the first, which serves as the global reference) and the global log-odds map values:
\begin{equation}
\mathbf{V} = [(\mathbf{T}^r)^\top, \mathbf{M}^\top]^\top
\end{equation}
where $\mathbf{T}^r = [(\mathbf{T}_2^r)^\top, \ldots, (\mathbf{T}_g^r)^\top]^\top$ contains the poses of all non-reference submaps. Each $\mathbf{T}_j^r = [\mathbf{t}_j^\top, \theta_j]^\top \in SE(2)$ represents the $j$-th pose of local submap coordinate , with $\mathbf{t}_j = [x_j, y_j]^\top$ and $\theta_j$ being the position and orientation, respectively; the corresponding rotation matrix is $\mathbf{R}_j$.

The objective function minimizes the variance-weighted residuals between the global map and each local submap:

\begin{equation}
\label{NLL}
f(\mathbf{V}) = \sum_{i=1}^{N} \sum_{j \in \{obs_i\}}
\frac{1}{\sigma_i^2(\mathbf{p}_{im_j})+\epsilon}
\left( \mathbf{M}(m_j) - \mu_i(\mathbf{p}_{im_j}) \right)^2
\end{equation}

where ${\mathbf{p}}_{im_j} = \mathbf{R}^\top_i (m_j - \mathbf{t}_i)$ is the projection of global cell $m_j$ into the $i$-th submap coordinate frame, and
$\mu_i(\cdot)$ and $\sigma_i^2(\cdot)$ are the predictive mean and variance of the $i$-th local RVM at the projected location.

\subsubsection{Gauss-Newton Iteration}

To solve the NLLS problem \eqref{NLL} using the Gauss-Newton method, we define the residual for global cell $m_j$ observed by the $i$-th submap as:
\begin{equation}
\label{30}
  r_{im_j} = \frac{\mathbf{M}(m_j) - \mu_i(\mathbf{p}_{im_j})}
  {\sqrt{\sigma_i^2(\mathbf{p}_{im_j}) + \epsilon}}
\end{equation}
so that $f(\mathbf{V}) = \sum_{i=1}^N \sum_{j \in \{obs_i\}}
r_{im_j}^2$. 

The Gauss-Newton update
$\Delta\mathbf{V} = [\Delta\mathbf{T}^\top,
\Delta\mathbf{M}^\top]^\top$ is obtained by solving:
\begin{equation}
\label{31}
  \mathbf{J}^\top \mathbf{J} \, \Delta\mathbf{V}
  = -\mathbf{J}^\top \mathbf{r}
\end{equation}
where $\mathbf{r}$ is the stacked residual vector and $\mathbf{J} = [\mathbf{J}_T \;\; \mathbf{J}_M]$ is the Jacobian of $\mathbf{r}$ with respect to $\mathbf{T}^r$ and $\mathbf{M}$, respectively.

\paragraph{Jacobian with respect to the global map}
Since $\mathbf{M}(m_j)$ appears linearly in $r_{im_j}$, the
derivative is:
\begin{equation}
  \frac{\partial r_{im_j}}{\partial \mathbf{M}(m_j)}
  = \frac{1}{\sqrt{\sigma_i^2(\mathbf{p}_{im_j}) + \epsilon}}
\end{equation}

\paragraph{Jacobian with respect to submap poses}
Both $\mu_i$ and $\sigma_i^2$ depend on the projected coordinate $\mathbf{p}_{im_j}$, which depends on $\mathbf{T}_i^r$. By the chain rule:
\begin{equation}
  \frac{\partial r_{im_j}}{\partial \mathbf{T}_i^r}
  = \frac{1}{\sqrt{\sigma_i^2 + \epsilon}}
  \left(
    -\frac{\partial \mu_i}{\partial \mathbf{p}_{im_j}}
    - \frac{r_{im_j}}{2(\sigma_i^2 + \epsilon)}
      \frac{\partial \sigma_i^2}{\partial \mathbf{p}_{im_j}}
  \right)
  \frac{\partial \mathbf{p}_{im_j}}{\partial \mathbf{T}_i^r}
  \label{eq:Jr}
\end{equation}

Since the RVM prediction is differentiable with respect to spatial coordinates, both the predictive mean and predictive variance admit analytical spatial gradients. Let
$\mathbf{k}_i(\mathbf{p}_{im_j}) \in \mathbb{R}^{S_i}$ denote the kernel vector between the projected point and the $S_i$ relevance
vectors of the $i$-th submap, and let
$\mathbf{J}_{k} \in \mathbb{R}^{2 \times S_i}$ denote the Jacobian of $\mathbf{k}_i^\top$ with respect to $\mathbf{p}_{im_j}$.

\begin{equation}
\frac{\partial \mu_i(\mathbf{p}_{im_j})}{\partial \mathbf{p}_{im_j}} = \mathbf{J}_{k} \boldsymbol{\mu}^i, \ \frac{\partial \sigma_i^2(\mathbf{p}_{im_j})}{\partial \mathbf{p}_{im_j}} = 2 \mathbf{J}_{k} \boldsymbol{\Sigma}^i \mathbf{k}_i(\mathbf{p}_{im_j})
\end{equation}
where $\boldsymbol{\mu}^i$ and $\boldsymbol{\Sigma}^i$ are the posterior mean and covariance of the $i$-th local RVM.

The coordinate projection derivative is:
\begin{equation}
  \frac{\partial \mathbf{p}_{im_j}}{\partial \mathbf{T}_i^r}
  = \left[
    \frac{\partial \mathbf{R}_i^\top}{\partial \theta_i}
    (\mathbf{x}_{m_j} - \mathbf{t}_i), \;\;
    -\mathbf{R}_i^\top
  \right] \in \mathbb{R}^{2 \times 3}
\end{equation}

\subsubsection{Efficient Solution via Schur Complement}

The Eq. \eqref{31} has the block structure:
\begin{equation}
  \begin{bmatrix}
    \mathbf{H}_{TT} & \mathbf{A}_{TM} \\
    \mathbf{A}_{TM}^\top & \mathbf{H}_{MM}
  \end{bmatrix}
  \begin{bmatrix}
    \Delta\mathbf{T} \\ \Delta\mathbf{M}
  \end{bmatrix}
  = -\begin{bmatrix}
    \mathbf{g}_T \\ \mathbf{g}_M
  \end{bmatrix}
\end{equation}

Since $\mathbf{H}_{MM}$ is diagonal and can be inverted in $O(G)$ time. Applying the Schur complement \cite{Zhang2005Schur} to eliminate $\Delta\mathbf{M}$:
\begin{equation}
\label{41}
\Delta\mathbf{T} = \left(\mathbf{H}_{TT} -\! \mathbf{A}_{TM} \!\mathbf{H}_{MM}^{-1} \!\mathbf{A}_{TM}^\top\right)^{-1} \!\left(-\mathbf{g}_T +\! \mathbf{A}_{TM} \!\mathbf{H}_{MM}^{-1} \!\mathbf{g}_M\right)
\end{equation}

After solving for $\Delta\mathbf{T}$, the global map update is recovered via:

\begin{equation}
  \Delta\mathbf{M} = \mathbf{H}_{MM}^{-1}
  \left( -\mathbf{g}_M
  - \mathbf{A}_{TM}^\top \, \Delta\mathbf{T} \right)
\end{equation}

The Schur-complement step above produces an incremental update
$\Delta \mathbf{M}$ at every outer iteration. Once the submap poses
$\hat{\mathbf{T}}^r$ have converged, however, the residual objective with respect
to the global map decouples across cells, and the optimal map admits a
closed-form expression. This is our second proposition.

\begin{proposition}
\label{prop:closedform}
Given converged submap poses $\hat{\mathbf{T}}^r$, the NLLS map
objective~\eqref{NLL} is strictly convex and separable in
$\{\mathbf{M}(m_j)\}_{j=1}^{G}$, and its unique global minimizer at cell $m_j$
is the variance-weighted fusion of the local submap predictions:
\begin{equation}
\label{eq:optmap}
\hat{\mathbf{M}}(m_j) \;=\; \frac{\sum_{i=1}^{N} \mu_i(\hat{\mathbf{p}}_{im_j}) \big/ \bigl(\sigma_i^2(\hat{\mathbf{p}}_{im_j}) + \epsilon\bigr)}{\sum_{i=1}^{N} 1 \big/ \bigl(\sigma_i^2(\hat{\mathbf{p}}_{im_j}) + \epsilon\bigr)}
\end{equation}
\end{proposition}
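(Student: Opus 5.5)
Fix the poses at $\hat{\mathbf{T}}^r$. Then every projected location $\hat{\mathbf{p}}_{im_j} = \hat{\mathbf{R}}_i^\top(m_j - \hat{\mathbf{t}}_i)$ is a fixed point in the frame of submap $i$. Consequently the predictive quantities $\mu_i(\hat{\mathbf{p}}_{im_j})$ and $\sigma_i^2(\hat{\mathbf{p}}_{im_j})$ from~\eqref{eq:pred} are constants that do not depend on $\mathbf{M}$.

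\textbf{Step 1: separability.} I swap the order of summation in~\eqref{NLL}, grouping terms by cell rather than by submap:
\begin{equation*}
f(\mathbf{M}) = \sum_{j=1}^{G} f_j\bigl(\mathbf{M}(m_j)\bigr), \qquad f_j(x) = \sum_{i:\, j \in \{obs_i\}} w_{ij}\,\bigl(x - \mu_i(\hat{\mathbf{p}}_{im_j})\bigr)^2,
\end{equation*}
where $w_{ij} := 1/(\sigma_i^2(\hat{\mathbf{p}}_{im_j}) + \epsilon)$. Each $\mathbf{M}(m_j)$ appears only in its own summand $f_j$. Hence $f$ is separable, and minimizing $f$ reduces to minimizing each $f_j$ independently.

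\textbf{Step 2: strict convexity.} The predictive variance satisfies $\sigma_i^2 = \Phi_{\mathbf{x}}^\top\boldsymbol{\Sigma}\Phi_{\mathbf{x}} \ge 0$ because $\boldsymbol{\Sigma}$ is positive semidefinite. Together with $\epsilon>0$ this gives $0 < w_{ij} \le 1/\epsilon$. Therefore
\begin{equation*}
f_j''(x) = 2\sum_i w_{ij} > 0
\end{equation*}
whenever cell $m_j$ is observed by at least one submap. So each $f_j$ is a strictly convex quadratic. The Hessian of $f$ is the diagonal matrix $\mathbf{H}_{MM}$ with these entries, which is positive definite, so $f$ is strictly convex in $\mathbf{M}$. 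This is the one point that needs care. Cells observed by no submap contribute nothing to $f$ and would make $f$ merely convex in those coordinates. I will state explicitly that the global map, as in~\eqref{NLL}, is restricted to cells observed by at least one submap. Equivalently, the sum in~\eqref{eq:optmap} is read as running over the submaps $i$ with $j\in\{obs_i\}$, which matches how~\eqref{same1} is used.

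\textbf{Step 3: minimizer.} I set $f_j'(x) = 2\sum_i w_{ij}(x - \mu_i) = 0$ and solve. This gives
\begin{equation*}
x = \frac{\sum_i w_{ij}\,\mu_i}{\sum_i w_{ij}},
\end{equation*}
which is exactly~\eqref{eq:optmap}. Strict convexity makes this stationary point the unique global minimizer of $f_j$. By separability, the vector assembled from these per-cell minimizers is the unique global minimizer of $f$. I will also remark that this coincides with the fused mean $\mu_G$ in~\eqref{same1}. As a consistency check, a single Gauss--Newton step in $\mathbf{M}$ with $\Delta\mathbf{T}=0$ is exact here, because $f$ is quadratic in $\mathbf{M}$; it reproduces the same formula via $\Delta\mathbf{M} = -\mathbf{H}_{MM}^{-1}\mathbf{g}_M$.
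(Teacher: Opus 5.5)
Your proof is correct and follows the paper's own argument. The paper likewise isolates the per-cell quadratic $f_{m_j}$, solves the first-order condition to obtain \eqref{eq:optmap}, and uses $\partial^2 f_{m_j}/\partial \mathbf{M}(m_j)^2=\sum_i 2/(\sigma_i^2+\epsilon)>0$ for global optimality. Your explicit regrouping by cell for separability, the justification $\sigma_i^2\ge 0$ from $\boldsymbol{\Sigma}$ being positive (semi)definite, and the caveat that strict convexity needs each cell to be observed by at least one submap (so the sums in \eqref{eq:optmap} run over observing submaps only) are refinements the paper leaves implicit. The only slip is that the Hessian of $f$ in $\mathbf{M}$ is $2\mathbf{H}_{MM}$ rather than $\mathbf{H}_{MM}$, which does not affect the conclusion.
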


\begin{proof}
For a particular cell $m_j$, the objective function is given by

\begin{equation}
f_{m_j}\!\left(\mathbf{M}(m_j)\right)
=
\sum_i^N
\frac{
\left(
\mathbf{M}(m_j)-\mu_i(\hat{\mathbf{p}}_{i m_j})
\right)^2
}{
\sigma_i^2(\hat{\mathbf{p}}_{i m_j})+\epsilon
}.
\end{equation}

Taking the derivative with respect to $M(m_j)$ and setting it to zero yields.

\begin{equation}
\frac{\partial f_{m_j}}{\partial \mathbf{M}(m_j)}
=
\sum_{i=1}^N
\frac{
2\left(\mathbf{M}(m_j)-\mu_i(\hat{\mathbf{p}}_{im_j})\right)
}{
\sigma_i^2(\hat{\mathbf{p}}_{im_j})+\epsilon
}
=0.
\end{equation}

Therefore, the closed-form solution can be obtained as Eq.~\eqref{eq:optmap}.

Furthermore, since $f_{m_j}$ is a quadratic function with respect to $\mathbf{M}(m_j)$, its second derivative satisfies

\begin{equation}
\frac{\partial^2 f_{m_j}}{\partial \mathbf{M}(m_j)^2}
=
\sum_{i=1}^N
\frac{2}{\sigma_i^2(\hat{\mathbf{p}}_{im_j})+\epsilon}
>0,
\end{equation}
which guarantees that the obtained solution corresponds to the global minimum rather than a local optimum.
\end{proof}

Two remarks are in order. First, Proposition~\ref{prop:closedform} eliminates
any iterative refinement of $\mathbf{M}$ once the poses have converged: the
incremental Schur-complement updates~\eqref{41} are needed only to drive the
pose optimization, after which a single application of~\eqref{eq:optmap}
recovers the optimal global map exactly. Second, the fusion
formula~\eqref{eq:optmap} is functionally identical to the variance-weighted
initialization~\eqref{same1}, which is no coincidence: under independent
Gaussian submap predictions, variance-weighted fusion is simultaneously the
minimum-variance estimator and the global minimizer of the NLLS map
objective. The complete procedure is summarized in Algorithm~\ref{alg:submap_joining}.

\begin{algorithm}[t]
\caption{Variance-Weighted Submap Joining}
\label{alg:submap_joining}
\begin{algorithmic}[1]
\REQUIRE Given $N$ overlapping local submaps,
initial poses $\mathbf{T}^r(0)$, global grid resolution $l_w$ and $l_h$, thresholds $\tau_k$, $\tau_\Delta$
\ENSURE Optimized poses $\hat{\mathbf{T}}^r$,
optimized global map $\hat{\mathbf{M}}$

\STATE Construct initial global map $\mathbf{M}(0)$ via
inverse-variance weighted fusion (\ref{same1}) using $\mathbf{T}^r(0)$

\FOR{$k = 0$ \TO $\tau_k$ \textbf{while}
$\|\Delta\mathbf{T}(k)\|^2 \geq \tau_\Delta$}

  \STATE Project global cells into each submap:
  $\mathbf{p}_{im_j} = \mathbf{R}_i^\top(\mathbf{x}_{m_j} - \mathbf{t}_i)$

  \STATE Query local RVMs to obtain
  $\mu_i(\mathbf{p}_{im_j})$ and
  $\sigma_i^2(\mathbf{p}_{im_j})$ via (\ref{eq:pred})

  \STATE Compute residuals
  $r_{im_j}$ via (\ref{30})

  \STATE Evaluate Jacobians $\mathbf{J}_T$ and
  $\mathbf{J}_M$ via (\ref{31})

  \STATE Solve for $\Delta\mathbf{T}(k)$ and update
  $\mathbf{T}^r(k\!+\!1) = \mathbf{T}^r(k) + \Delta\mathbf{T}(k)$

  \STATE Recover map update:
  $\Delta\mathbf{M} = \mathbf{H}_{MM}^{-1}(-\mathbf{g}_M
  - \mathbf{A}_{TM}^\top \Delta\mathbf{T})$\quad via (\ref{41})

  \STATE Update $\mathbf{M}(k\!+\!1) = \mathbf{M}(k) + \Delta\mathbf{M}$

\ENDFOR

\STATE $\hat{\mathbf{T}}^r = \mathbf{T}^r(k+1)$

\STATE Compute optimized global map $\hat{\mathbf{M}}$ in closed form \eqref{eq:optmap} :

\RETURN $\hat{\mathbf{T}}^r$, $\hat{\mathbf{M}}$
\end{algorithmic}
\end{algorithm}

\section{Experiment} 
We evaluate our framework, which jointly optimizes local submaps and the global map within a continuous probabilistic occupancy latent space. Lacking direct continuous submap joining baselines, we compare against GBSJ~\cite{wang2024grid}, a state-of-the-art discrete grid method that outperforms Cartographer~\cite{hess2016real}. For continuous mapping, we benchmark against SBKM~\cite{duong2022autonomous} due to its similar sparse Bayesian formulation. Other continuous mapping methods are based on substantially different representations, making direct comparison under the same submap joining framework nontrivial.

Since public practical 2D LiDAR datasets lack ground-truth sensor poses, we first evaluate the proposed method quantitatively and qualitatively in two simulated environments, followed by validation on two public practical datasets. Table~\ref{tab:datasets} summarizes all datasets used.

In all experiments, local submaps are constructed using the proposed continuous occupancy mapping framework, while Occupancy-SLAM~\cite{11029136} is used to optimize the internal poses within each submap. Depending on the dataset scale, the number of local submaps ranges from 3 to 10.

\begin{table}[htbp]
\centering
\caption{The parameters of all datasets.}
\label{tab:datasets}
\begin{tabular}{lccc}
\toprule
\textbf{Dataset} & \textbf{No. Scans} & \textbf{Duration (s)} & \textbf{Map Size (m)} \\
\midrule
\textbf{Simulation 1} & 3640 & 117 & $50 \times 50$ \\
\textbf{Simulation 2} & 2680 & 83 & $50 \times 50$ \\
\textbf{Museum b0 1G} & 5522 & 152 & $100 \times 100$ \\
\textbf{Museum b0 EG} & 22650 & 615 & $200 \times 160$ \\
\bottomrule
\end{tabular}
\end{table}

\subsubsection{Simulation Experiments}
Following the setup in \cite{wang2024grid}, we employ two simulation environments with varying nonlinearity, non-convex obstacles, and long corridors. Each scan comprises 1081 beams spanning from -135° to 135°, simulating a Hokuyo UTM-30LX laser scanner. To mimic real-world conditions, we add different zero-mean Gaussian noises respectively to each beam of the ground-truth-generated scans and to the odometry inputs derived from ground-truth poses, generating five datasets with distinct random noise sets per simulation environment.

Table \ref{tab:pose_errors} presents the quantitative pose error comparison for GBSJ, SBKM, and our method over five simulation runs. Translation error (in meters) and rotation error (in radians) are evaluated using Mean Absolute Error (MAE) and Root Mean Square Error (RMSE). 
Our method consistently outperforms all discrete grid‑based algorithms across every metric. This clearly validates that submap joining on a continuous map is advantageous. Furthermore, our method also achieves substantially better results than SBKM, demonstrating that our mapping formulation is more effective, yielding higher predictive precision and more trustworthy uncertainty quantification.

\begin{table}[htbp]
\centering
\caption{Pose estimation errors (translation in meters, rotation in radians).}
\label{tab:pose_errors}
\begin{tabular}{llccc} 
\toprule
\textbf{Dataset} & \textbf{Metric} & \textbf{GBSJ} & \textbf{SBKM} & \textbf{Our} \\
\midrule
\multirow{4}*{\textbf{Simulation 1}} 
& MAE (Trans/m)   &\textcolor{blue}{\textbf{0.0453}} & 0.2837 & \textcolor{red}{\textbf{0.0164}} \\
& MAE (Rot/rad)   &\textcolor{blue}{\textbf{0.0012}} & 0.0068 & \textcolor{red}{\textbf{0.0009}} \\
& RMSE (Trans/m)  &\textcolor{blue}{\textbf{0.0609}}  & 0.3823 & \textcolor{red}{\textbf{0.0279}} \\
& RMSE (Rot/rad)  &\textcolor{blue}{\textbf{0.0015}}  & 0.0174 & \textcolor{red}{\textbf{0.0011}} \\
\midrule
\multirow{4}*{\textbf{Simulation 2}} 
& MAE (Trans/m)   &\textcolor{blue}{\textbf{0.0151}}  & 0.2532 & \textcolor{red}{\textbf{0.0051}} \\
& MAE (Rot/rad)   &\textcolor{blue}{\textbf{0.0011}}  & 0.0106 & \textcolor{red}{\textbf{0.0003}} \\
& RMSE (Trans/m)  &\textcolor{blue}{\textbf{0.0205}}  & 0.2506 & \textcolor{red}{\textbf{0.0105}} \\
& RMSE (Rot/rad)  &\textcolor{blue}{\textbf{0.0018}}  & 0.0023 & \textcolor{red}{\textbf{0.0013}} \\
\bottomrule
\end{tabular}
\vspace{2mm}

\footnotesize{
\textcolor{red}{\textbf{Red}} and \textcolor{blue}{\textbf{blue}} represent the best and second-best results, respectively.
}
\end{table}

\begin{figure*}[t] 
\centering
\noindent
\begin{minipage}[b]{0.1988\textwidth}
    \centering
    \includegraphics[width=\textwidth]{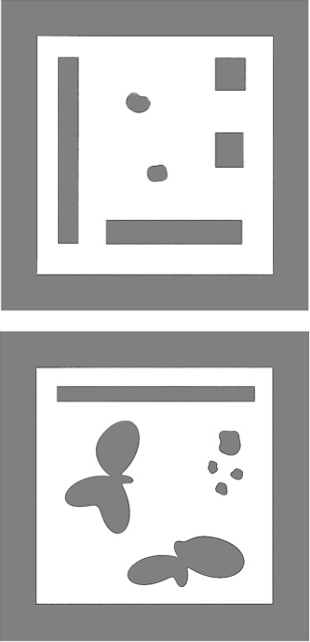}\\[1mm]
    {\footnotesize (a) Ground Truth}
\end{minipage}\hfill %
\begin{minipage}[b]{0.202\textwidth}
    \centering
    \includegraphics[width=\textwidth]{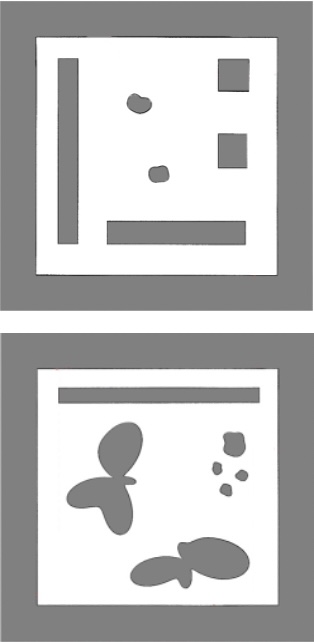}\\[1mm]
    {\footnotesize (b) CBJS~\cite{wang2024grid}}
\end{minipage}\hfill %
\begin{minipage}[b]{0.202\textwidth}
    \centering
    \includegraphics[width=\textwidth]{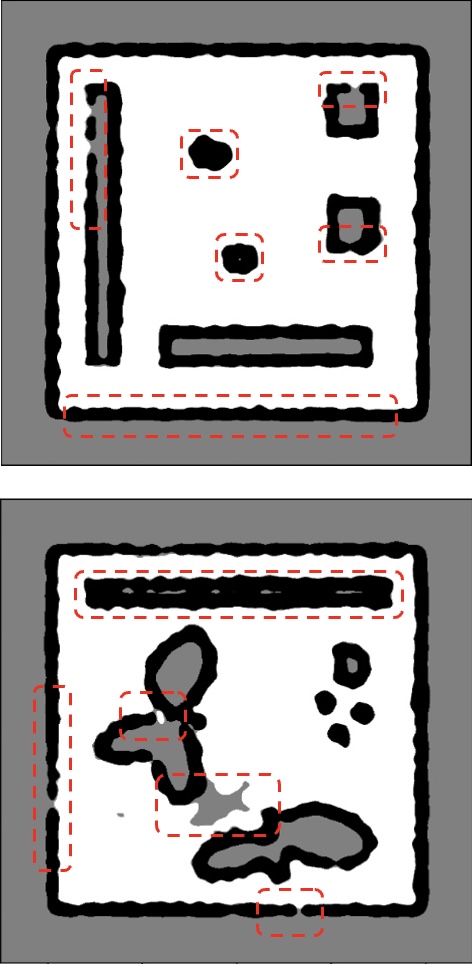}\\[1mm]
    {\footnotesize (d) SBKM~\cite{duong2022autonomous}}
\end{minipage}\hfill %
\begin{minipage}[b]{0.2564\textwidth}
    \centering
    \includegraphics[width=\textwidth]{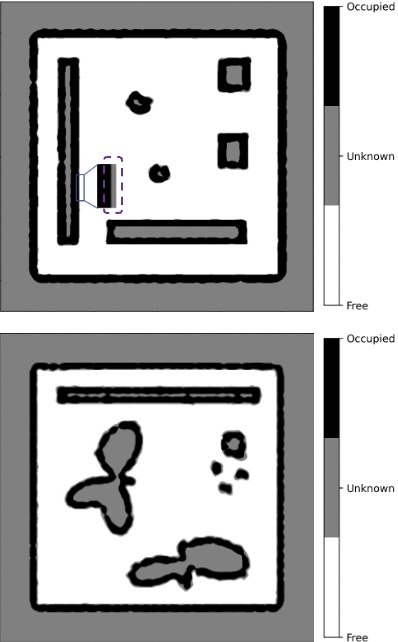}\\[1mm]
    {\footnotesize (e) Ours}
\end{minipage}

\caption{Occupancy grid maps and continuous occupancy maps generated by submap joining. Initial poses come from Occupancy-SLAM. After optimization by CBSJ, SBKM, and our approach, the maps are presented for Simulation 1 (first row) and Simulation 2 (second row) on one dataset.}
\label{fig:4}
\end{figure*} 

Fig. \ref{fig:4} shows the occupancy grid maps and continuous occupancy maps generated by submap joining in Simulation 1 and Simulation 2. It can be observed that, compared to traditional occupancy grid methods, the continuous occupancy maps provide a more reasonable representation of object boundaries, forming a smooth transition region between free space and occupied space. Meanwhile, the results of our proposed method are also superior to those of SBKM, particularly in the areas highlighted by red dots in the figure. As shown in the zoomed-in view of Fig. \ref{fig:4}(e), our method classifies transitional areas at the free‑occupied boundary as unknown (occupancy probability $\approx$ 0.5) rather than forcing a binary decision, which is actually a more reasonable representation.

Based on the above experiments, we quantitatively compare the occupancy maps generated by submap joining in Simulation 1 and Simulation 2, adopting the Area Under the Receiver Operating Characteristic Curve (AUC) and precision as evaluation metrics. Both the generation of ground truth labels and the handling of unknown cells follow the setup in \cite{wang2024grid}. The results are presented in Table \ref{tab:accuracy}. On both metrics, our method achieves comparable performance to GBSJ and outperforms SBKM. The last two rows of each dataset report an ablation study on Eq. \eqref{Continuous Occupancy Prediction}. We compare the proposed occupancy prediction model with a mean-only variant ($\kappa =1$) that ignores predictive variance. Incorporating predictive variance consistently improves both Precision and AUC across all datasets. This demonstrates that uncertainty information helps suppress overconfident predictions in ambiguous regions, leading to more reliable occupancy classification and improved map accuracy.

\begin{table}[htbp]
\centering
\caption{Accuracy of the occupancy grid maps (reslution 10cm).}
\label{tab:accuracy}
\renewcommand{\arraystretch}{1.1}

\begin{tabular}{c c c c} 
\toprule
\textbf{Dataset}& \textbf{Method} & \textbf{AUC} & \textbf{Precision} \\
\midrule

\multirow{3}{*}{\textbf{Simulation 1}}
& GBSJ &\textcolor{blue}{\textbf{0.9731}}  &\textcolor{blue}{\textbf{0.9806}} \\
& SBKM & 0.9341 & 0.9467 \\
& Ours w/o variance  & 0.9223 & 0.9297 \\
& Ours & \textcolor{red}{\textbf{0.9839}} & \textcolor{red}{\textbf{0.9812}} \\
\midrule

\multirow{3}{*}{\textbf{Simulation 2}}
& GBSJ & \textcolor{red}{\textbf{0.9846}} &\textcolor{red}{\textbf{0.9941}}  \\
& SBKM & 0.9149 & 0.9419 \\
& Ours w/o variance  & 0.9227 & 0.9399 \\
& Ours &\textcolor{blue}{\textbf{0.9802}} &\textcolor{blue}{\textbf{0.9915}} \\
\bottomrule

\end{tabular}
\end{table}

\subsubsection{Comparisons Using Practical Datasets}
We evaluate our method against two baselines: CBSJ and SBKM. For evaluation, we use two large‑scale practical datasets from \cite{zhao2024occupancy}: Deutsches Museum b0 1G \cite{hess2016real} and Museum b0 EG \cite{zhao20202d}, with map sizes ranging from $100m \times100m$ to $200m \times160m$.

The results are presented in Fig. \ref{fig:real}. Unlike the discrete grid maps, our continuous occupancy representation naturally yields smooth object boundaries without discretization artifacts. Compared with CBSJ, which also operates on a grid, our method achieves comparable accuracy while offering additional advantages, such as analytically computable gradients and consistent uncertainty estimates at arbitrary query locations. Notably, the continuity of our map eliminates the need for post‑processing or interpolation, leading to cleaner and geometrically more plausible reconstructions.

The results on the two large‑scale datasets demonstrate that our method performs on par with CBSJ in both detail capture and overall accuracy, while significantly outperforming SBKM, particularly in the areas
highlighted by red dots in the figure.

Meanwhile, SBKM performs better on small-scale simulation datasets than on large practical ones, especially the largest (b0 EG). This is because SBKM cannot optimize the increasing observation noise that comes with longer trajectories, leading to degraded performance.

\begin{figure*}[htbp]
\centering
\begin{subfigure}[b]{0.3\textwidth}
    \centering
    \includegraphics[width=\textwidth]{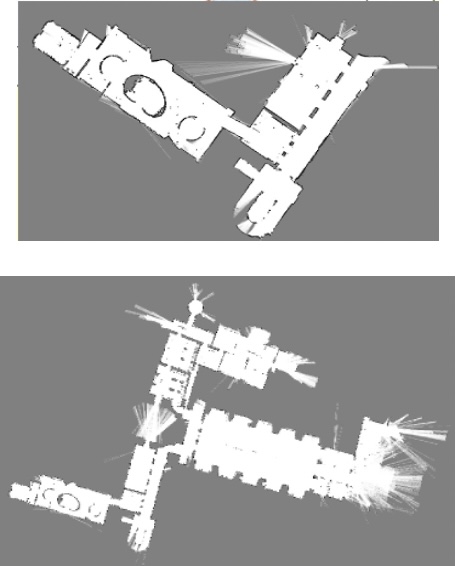}   
    \caption{CBSJ}
    \label{fig:carto}
\end{subfigure}
\hfill
\begin{subfigure}[b]{0.3\textwidth}
    \centering
    \includegraphics[width=\textwidth]{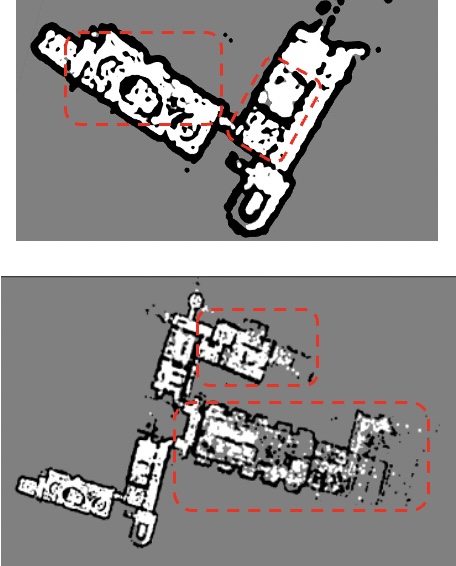} 
    \caption{SBKM}
    \label{fig:occ}
\end{subfigure}
\hfill
\begin{subfigure}[b]{0.3\textwidth}
    \centering
    \includegraphics[width=\textwidth]{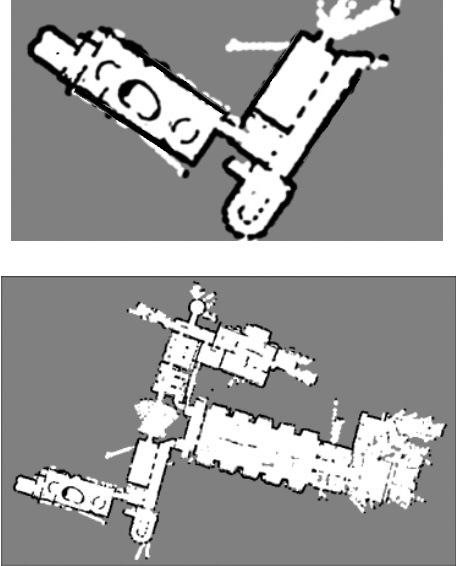}           
    \caption{Ours}
    \label{fig:ours}
\end{subfigure}

\caption{Experimental results on the practical datasets: Deutsches Museum b0 1G (first row) and b0 EG (second row). Our method achieves comparable performance to CBSJ and outperforms SBKM in both detail and overall quality.}
\label{fig:real}
\end{figure*}

\subsubsection{Efficiency and Convergence}
On two simulation datasets, we compare the proposed method against SBKM, with a focus on map quality, model compactness, submap iteration count, and computational efficiency.

\textbf{Map Quality.}
As shown in Fig.~\ref{fig:comparison}, the proposed method produces smoother occupancy maps with improved boundary continuity and more consistent free-to-occupied transitions. In contrast, the maps generated by SBKM exhibit noticeable local fluctuations and irregular occupancy boundaries, which are likely caused by approximate posterior inference under the non-conjugate classification likelihood.

In addition, we further evaluate the mapping quality in the latent space. As shown in Table \ref{tab:latent_stats}, compared with our proposed method, the RVM-based method exhibits not only a significantly larger range of predictive mean but also a much higher predictive variance. This result indicates that the RVM-based method yields over-inflated and unreasonable variance estimates, whereas our method maintains the uncertainty within a reasonable range. This is another key factor leading to the suboptimal submap joining performance of the RVM-based method.

Fig.~\ref{fig:7} further compares the optimized RV distributions of SBKM and the proposed method. Positive RVs mainly correspond to occupied structures, while negative RVs describe free-space regions in the latent occupancy field. Compared with SBKM, the proposed method produces a more spatially organized sparse representation, where positive RVs are concentrated near occupancy boundaries and negative RVs are more uniformly distributed over free space, especially in the black-dotted region. This more structured RV distribution contributes to the improved map smoothness and boundary consistency observed in the proposed method.

\begin{figure}[htbp]
    \centering
    \begin{subfigure}[b]{0.243\textwidth}
        \centering
        \includegraphics[width=\textwidth]{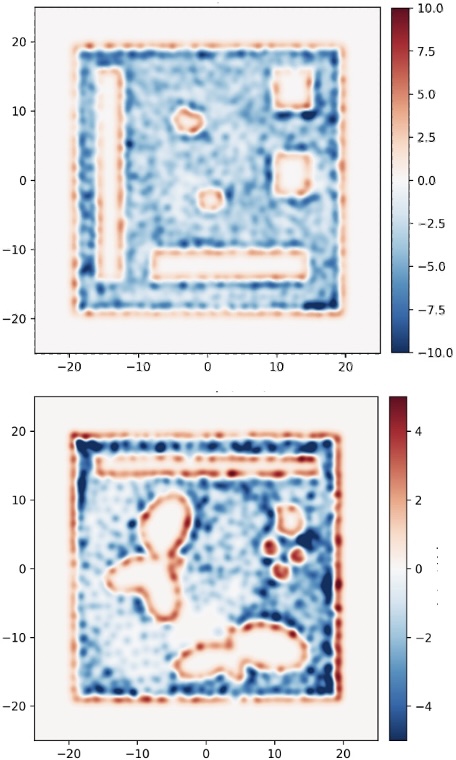}
        \caption{SBKM}      
        \label{fig:gt}
    \end{subfigure}
    \hfill
    \begin{subfigure}[b]{0.234\textwidth}
        \centering
        \includegraphics[width=\textwidth]{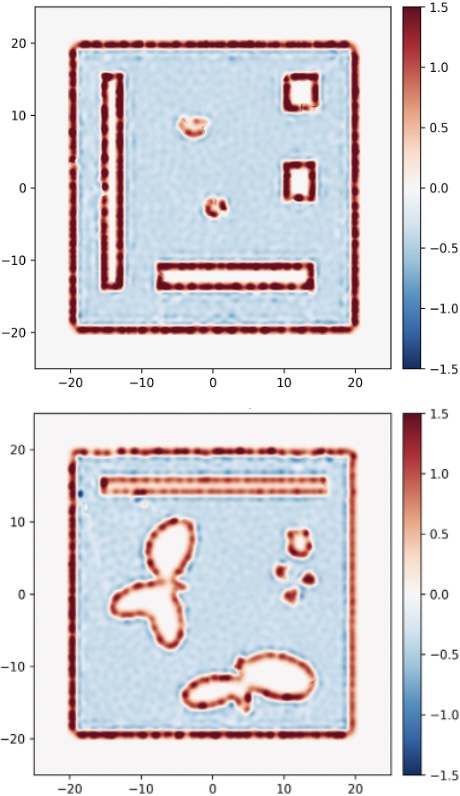}
        \caption{Ours}      
        \label{fig:carto}
    \end{subfigure}
    \caption{Visualization Comparison of Our Method and SBKM for Continuous Mapping across two datasets: Simulation 1 (first row) and Simulation 2 (second row).}
    \label{fig:comparison}
\end{figure}

\begin{table}[htbp]
\centering
\caption{Comparison of the Range of Predictive Mean and the Range of Predictive Variance in the Latent Space.}
\label{tab:latent_stats}
\begin{tabular}{lccc}
\toprule
\multirow{2}{*}[-0.5ex]{\textbf{Dataset}} & \multirow{2}{*}[-0.5ex]{\textbf{Method}} & \multicolumn{2}{c}{\textbf{Predictive}} \\
\cmidrule(lr){3-4}
                         &                         & \textbf{Mean}   & \textbf{Variance} \\
\midrule
\multirow{2}{*}{\textbf{Simulation 1}} 
    & SBKM  & [-15.97,6.14] & [0.0,7925.1] \\
    & Ours & [-1.62,1.37]  & [0.0,0.56] \\
\midrule
\multirow{2}{*}{\textbf{Simulation 2}} 
    & SBKM  & [-16.21,11.22] & [0.0,8263.2] \\
    & Ours & [-1.08,2.62]   & [0.0,0.67] \\
\bottomrule
\end{tabular}
\end{table}

\textbf{Model Compactness.} Table~\ref{Efficiency and Convergence} reports the average construction time and average number of RVs per submap for each method. The proposed method is approximately $2.1\times$ faster than SBKM while using nearly $1/3$ the number of RVs per submap. The substantial speed improvement mainly results from the proposed Gaussian-conjugate formulation, which enables exact posterior inference without iterative Laplace approximation. Meanwhile, the reduced number of RVs indicates that the proposed framework learns a more compact and spatially efficient sparse representation with lower redundancy and memory consumption.

\textbf{Convergence Speed.}
Table~\ref{Efficiency and Convergence} further compares the convergence behavior of different methods during submap joining. The proposed method consistently converges with substantially fewer optimization iterations while achieving lower final objective errors (Table \ref{tab:pose_errors}). This improvement mainly benefits from the smoother and more spatially consistent continuous occupancy representation produced by the proposed framework, which provides more stable gradient information during submap joining optimization. In contrast, the approximate posterior inference used in SBKM introduces local fluctuations in the latent occupancy field, resulting in less stable optimization behavior and slower convergence.

Overall, compared with SBKM, the proposed method achieves more compact sparse representations with fewer RVs, faster convergence, and improved mapping consistency, demonstrating superior computational efficiency and occupancy representation quality.

\textbf{Assessment of Robustness to Initial Guess.}
We use Simulation 1 dataset to quantitatively evaluate the convergence percentage and the accuracy of optimized poses under different noise levels. We add zero-mean uniformly distributed noises with varying bounds to the ground truth poses of the non-reference local submap coordinate. Specifically, the noise bounds for translation and rotation were set to [$-1$\,m, $1$\,m] and [$-0.15$\,rad, $0.15$\,rad] for Level 1, [$-2$\,m, $2$\,m] and [$-0.3$\,rad, $0.3$\,rad] for Level 2, and [$-3$\,m, $3$\,m] and [$-0.45$\,rad, $0.45$\,rad] for Level 3, respectively. As summarized in Table~\ref{tab:robustness_initialization}, our method achieves a 100\% convergence rate under Level 1 and Level 2 noise, and maintains a high convergence percentage of 70\% even under the most challenging Level 3 noise, demonstrating strong robustness to highly noisy initial guesses. Our algorithm using simulation 2 datasets has similar robustness performance.

\begin{table}[t]
\centering
\caption{Computational Efficiency and Convergence Comparison (Intel Core i9-14900HX).}
\label{Efficiency and Convergence}
\renewcommand{\arraystretch}{1.1}

\begin{tabular}{c c c c c}
\toprule
\textbf{Dataset}& \textbf{Method} & \textbf{Time/Sub} & \textbf{RVs/Sub} & \textbf{Iterations} \\
\midrule

\multirow{2}{*}{\textbf{Simulation 1}}
& SBKM & 45 min & 5488 & 12 \\
& Our  & 21 min & 2082 & 4 \\
\midrule

\multirow{2}{*}{\textbf{Simulation 2}}
& SBKM & 35 min & 4299 & 16 \\
& Our  & 17 min & 1576 & 6 \\
\bottomrule

\end{tabular}
\end{table}

\begin{figure}[htbp]
    \centering
    \begin{subfigure}[b]{0.233\textwidth}
        \centering
        \includegraphics[width=\textwidth]{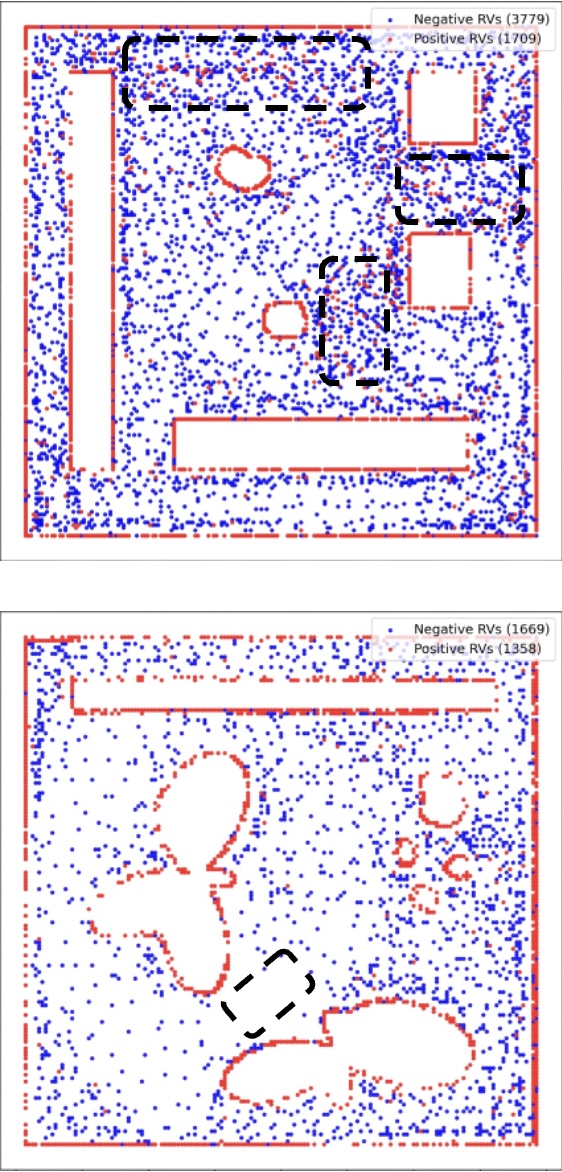}
        \caption{SBKM}      
        \label{fig:gt}
    \end{subfigure}
    \hfill
    \begin{subfigure}[b]{0.236\textwidth}
        \centering
        \includegraphics[width=\textwidth]{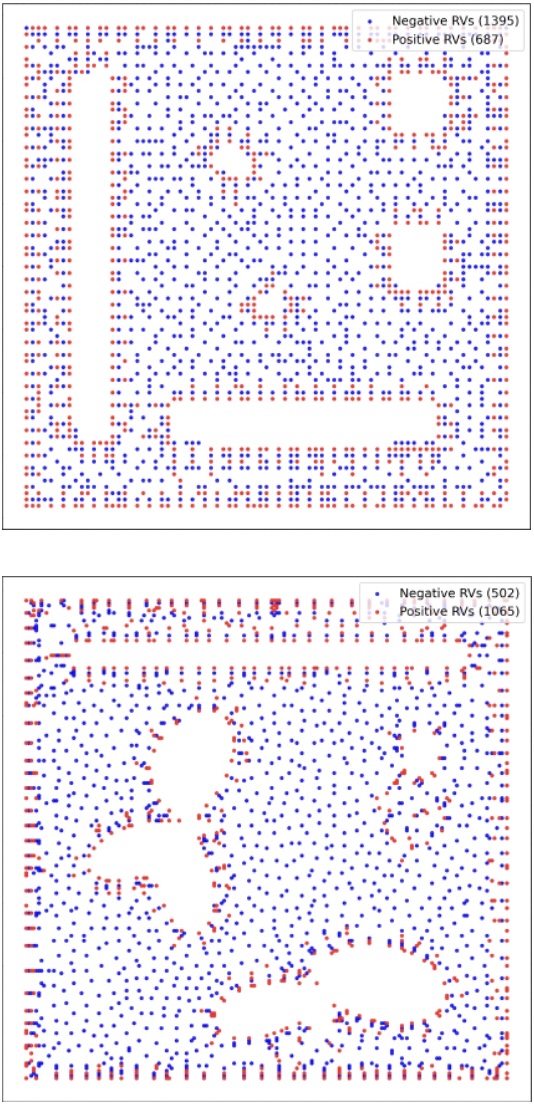}
        \caption{Ours}      
        \label{fig:carto}
    \end{subfigure}
    \caption{Comparison of optimized relevance vector (RV) distributions between SBKM and the proposed method. Positive and negative RVs are shown in red and blue, respectively. The proposed method produces a more compact and spatially organized RV distribution with clearer separation between occupied and free-space structures.
}
    \label{fig:7}
\end{figure}

\begin{table}[htbp]
\centering
\caption{Robustness to Initialization.} 
\label{tab:robustness_initialization}
\vspace{0.5em} 

\setlength{\tabcolsep}{2.35pt} 

\begin{tabular}{lccc} 
\toprule
\multicolumn{1}{c}{\textbf{Noise Level}} & 
\makecell[c]{\textbf{Convergence}\\\textbf{Percentage}} & 
\makecell[c]{\textbf{Average MAE of}\\\textbf{Trans (m)}} & 
\makecell[c]{\textbf{Average MAE of}\\\textbf{Rot (rad)}} \\ 
\midrule
Level 1 (1 m, 0.15 rad) & 100\% & 0.00379 & 0.0006 \\
Level 2 (2 m, 0.3 rad)   & 100\% & 0.00402 & 0.0007 \\
Level 3 (3 m, 0.45 rad) & 70\%  & 0.01142 & 0.0022 \\ 
\bottomrule
\end{tabular}
\end{table}

\section{Conclusion} 
\label{sec:conclusion}
We presented the first continuous probabilistic submap joining framework
that jointly optimizes local submap frames and a global occupancy field in
the latent log-odds space. The framework is enabled by an
information-preserving heteroscedastic RVM formulation: raw observations are
compressed into sufficient-statistic log-odds tuples, and we prove this
compression preserves the weight posterior exactly. This restores Gaussian
conjugacy, yields closed-form posterior mean and covariance, and supports
joint update of the noise precision and the relevance set, removing the
Laplace approximation of classification-based RVM. On this basis, the
variance-weighted joining formulation admits analytical Jacobians and a
closed-form globally optimal map, structurally avoiding the interpolation
artifacts of grid-based methods. Experiments on simulated and large-scale
practical datasets confirm superior pose accuracy, global consistency,
model compactness, and uncertainty calibration over state-of-the-art
baselines. Future work will explore adaptive submap partitioning based on
information gain and extensions to 3D continuous occupancy mapping.
\bibliographystyle{IEEEtran}

\end{document}